\begin{document}

\title{GRAM-MAMBA: Holistic Feature Alignment for Wireless Perception with Adaptive Low-Rank Compensation }

\author{Weiqi Yang}
\email{2023111372@mail.hfut.edu.cn}
\orcid{0009-0004-5409-1638}
\affiliation{%
  \institution{Hefei University of Technology}
  \city{Hefei}
  \state{Anhui}
  \country{China}
}

\author{Xu Zhou}
\affiliation{%
  \institution{Sun Yat-sen University}
  \city{Guangzhou}
  \state{Guangdong}
  \country{China}}
\email{zhoux369@mail2.sysu.edu.cn}

\author{Jingfu Guan}
\affiliation{%
 \institution{Hefei University of Technology}
 \city{Hefei}
 \state{Anhui}
 \country{China}}
\email{2024180208@mail.hfut.edu.cn}

\author{Hao Du}
\affiliation{%
  \institution{Jilin University}
  \city{Changchun}
  \state{Jilin}
  \country{China}
}
\email{duhao22@mails.jlu.edu.cn}

\author{Tianyu Bai}
\affiliation{%
 \institution{Durham University}
 \city{Durham}
 \country{UK}}
\email{Batiaoyu11@gmail.com}

\renewcommand{\shortauthors}{Yang et al.}

\begin{abstract}
  Multi-modal fusion is crucial for Internet of Things (IoT) perception, widely deployed in smart homes, intelligent transport, industrial automation, and healthcare. However, existing systems often face challenges: high model complexity hinders deployment in resource-constrained environments, unidirectional modal alignment neglects inter-modal relationships, and robustness suffers when sensor data is missing. These issues impede efficient and robust multimodal perception in real-world IoT settings. To overcome these limitations, we propose GRAM-MAMBA. This framework utilizes the linear-complexity Mamba model for efficient sensor time-series processing, combined with an optimized GRAM matrix strategy for pairwise alignment among modalities, addressing the shortcomings of traditional single-modality alignment. Inspired by Low-Rank Adaptation (LoRA), we introduce an adaptive low-rank layer compensation strategy to handle missing modalities post-training. This strategy freezes the pre-trained model core and irrelevant adaptive layers, fine-tuning only those related to available modalities and the fusion process. Extensive experiments validate GRAM-MAMBA's effectiveness. On the SPAWC2021 indoor positioning dataset, the pre-trained model shows lower error than baselines; adapting to missing modalities yields a 24.5\% performance boost by training <0.2\% of parameters. On the USC-HAD human activity recognition dataset, it achieves 93.55\% F1 and 93.81\% Overall Accuracy (OA), outperforming prior work; the update strategy increases F1 by 23\% while training <0.3\% of parameters. These results highlight GRAM-MAMBA's potential for achieving efficient and robust multimodal perception in resource-constrained environments.
\end{abstract}

\begin{CCSXML}
<ccs2012>
 <concept>
  <concept_id>10010147.10010257.10010297</concept_id>
  <concept_desc>Computing methodologies~Artificial intelligence~Machine learning</concept_desc>
  <concept_significance>500</concept_significance>
 </concept>
 <concept>
  <concept_id>10010520.10010575.10010578</concept_id>
  <concept_desc>Computer systems organization~Embedded and cyber-physical systems~Sensor networks</concept_desc>
  <concept_significance>300</concept_significance>
 </concept>
 <concept>
  <concept_id>10003120.10003138</concept_id>
  <concept_desc>Human-centered computing~Ubiquitous and mobile computing</concept_desc>
  <concept_significance>300</concept_significance>
 </concept>
 <concept>
  <concept_id>10010147.10010257.10010297.10010300.10010306</concept_id>
  <concept_desc>Computing methodologies~Artificial intelligence~Machine learning~Machine learning approaches~Neural networks</concept_desc>
  <concept_significance>100</concept_significance>
 </concept>
</ccs2012>
\end{CCSXML}

\ccsdesc[500]{Computer systems organization~Embedded and cyber-physical systems~Sensor networks}
\ccsdesc{Human-centered computing~Ubiquitous and mobile computing}
\ccsdesc[100]{Computing methodologies~Artificial intelligence~Machine learning~Machine learning approaches~Neural networks}

\keywords{Multimodal Fusion, Internet of Things, Mamba, Low-Rank Adaptation, Sensor Networks,wireless sensing, Missing Data Robustness}

\received{20 February 2007}
\received[revised]{12 March 2009}
\received[accepted]{5 June 2009}

\maketitle

\section{INTRODUCTION}
In recent years, the Internet of Things (IoT) has witnessed exponential growth fueled by advances in embedded systems and wireless communication technologies, with projections indicating over 30 billion connected devices globally by 2025 \cite{iot_growth_2025}. This proliferation has enabled pervasive sensing across diverse domains including smart cities \cite{smart_cities_iot}, precision agriculture \cite{precision_agriculture_iot}, and digital healthcare \cite{digital_healthcare_iot}. A particularly promising development has been the emergence of multimodal wireless perception systems that synergistically combine traditional sensors such as inertial measurement units (IMUs) \cite{imu_sensing} and computer vision \cite{computer_vision_sensing} with ubiquitous wireless signals including WiFi Channel State Information \cite{wifi_csi_sensing}, ultra-wideband \cite{uwb_sensing}, and millimeter-wave radar \cite{mmwave_radar_sensing}.These hybrid systems leverage the complementary strengths of different sensing modalities - where optical sensors provide rich spatial information \cite{computer_vision_sensing}, inertial sensors offer precise motion tracking \cite{imu_sensing}, and RF signals enable device-free sensing through walls and obstructions \cite{wireless_sensing_survey}. Recent studies have demonstrated that such multimodal fusion can improve activity recognition accuracy by up to 27\% compared to unimodal approaches \cite{multimodal_fusion_survey}, while simultaneously reducing false positives in applications like fall detection \cite{sensor_fault_detection} and intrusion monitoring \cite{wireless_sensing_survey}. The integration of these technologies is driving innovation across multiple sectors including intelligent transportation systems \cite{multimodal_fusion_survey}, contactless health monitoring \cite{digital_healthcare_iot}, and next-generation human-computer interaction \cite{computer_vision_sensing}.

Despite this potential, achieving efficient and robust multimodal perception in practical IoT settings is hindered by significant hurdles. First, many state-of-the-art fusion models, particularly those based on Transformers, suffer from high computational complexity \cite{transformer_complexity} (often quadratic with sequence length), rendering them unsuitable for deployment on resource-constrained edge devices typical in IoT environments with limited power, memory, and processing capabilities \cite{edge_computing_iot}. Recent theoretical analysis \cite{computational_complexity_transformers} has shown that traditional attention mechanisms scale poorly with sequence length, necessitating the development of more efficient architectures like structured state spaces \cite{s4_original} and linear attention mechanisms \cite{linear_attention}. Second, existing fusion techniques frequently employ suboptimal feature alignment strategies. Simple concatenation or unidirectional alignment—where diverse modalities are mapped to a single reference source—often fails to capture the intricate pairwise or holistic inter-modal correlations and temporal dependencies vital for comprehensive understanding \cite{multimodal_fusion_survey,deep_multimodal_learning}. Theoretical work on multimodal learning \cite{modality_dropout_theory} has demonstrated that heterogeneous features require sophisticated alignment mechanisms to realize their full potential, thus limiting representational power and fusion efficacy when using naive approaches. Third, robustness against missing modalities remains a critical challenge. Real-world IoT systems inevitably face sensor failures or data dropouts, causing models trained on complete data to suffer significant performance degradation \cite{sensor_fault_detection}. While existing methods attempt to maintain robustness in the face of missing modalities, often resorting to data imputation strategies, these approaches have notable disadvantages. For example, deep learning-based imputation techniques can suffer from cumulative errors propagating through the generated data, while simpler methods like interpolation require prior knowledge of surrounding data points, limiting their applicability. Furthermore, many imputation strategies require pre-computation and may not be sufficiently adaptive to dynamic data dropout scenarios. This highlights the urgent need for robust and efficiently adaptive perception systems capable of handling modality absence without relying solely on pre-computed imputation or costly retraining protocols. Current approaches in multimodal IoT perception often fall short of simultaneously addressing these critical challenges. Models optimized for efficiency frequently sacrifice sophisticated fusion and alignment, relying on simpler techniques that may underperform \cite{computational_complexity_transformers}. Conversely, methods achieving strong inter-modal alignment through complex mechanisms like cross-attention often incur prohibitive computational costs for edge devices \cite{edge_computing_iot}. Moreover, existing techniques for ensuring robustness against missing data typically necessitate either computationally intensive retraining protocols, rely on imputation methods with inherent limitations, or lack methods for truly parameter-efficient adaptation based on pre-trained models \cite{parameter_efficient_finetuning}. This urgently demands a unified framework enabling concurrent computational efficiency, holistic feature alignment, and adaptive robustness based pre-trained model for multi-modal wireless perception.

To address these limitations, this paper proposes GRAM-MAMBA, a novel framework designed for efficient, robust, and holistically aligned multimodal wireless perception.GRAM-MAMBA leverages the Mamba architecture, a recent state space model renowned for its linear complexity scaling with sequence length, making it highly suitable for processing long sensor time series data efficiently on resource-constrained devices. We integrate Mamba with an optimized GRAM matrix-based alignment strategy. This explicitly models and aligns features between all pairs of modalities, capturing their rich interdependencies holistically, overcoming the limitations of unidirectional alignment.Inspired by the parameter-efficient fine-tuning technique Low-Rank Adaptation (LoRA), we introduce an Adaptive Low-Rank Compensation mechanism. This allows the pre-trained GRAM-MAMBA model to adapt efficiently to scenarios with missing sensor modalities by updating only small, low-rank adaptive layers, freezing the bulk of the model parameters, and avoiding costly retraining.Crucially, this parameter-efficient adaptation directly learns to compensate for missing information within the model itself, thereby overcoming the limitations inherent in traditional data imputation approaches, such as the potential for cumulative error propagation, the requirement for unavailable contextual data (like in interpolation), or the need for computationally intensive preprocessing steps.

GRAM-MAMBA is able to perform highly accurate Human Activity Recognition (HAR) and Indoor Positioning efficiently using multimodal wireless data, significantly improving robustness when sensors are missing. For HAR on the USC-HAD dataset, the pre-trained GRAM-MAMBA achieves 93.55\% F1 score and 93.81\% OA, outperforming prior methods. Critically, when adapting to missing modalities, its parameter-efficient update (less than 0.3\% of parameters) leads to a 23\% F1 score improvement, demonstrating superior robustness and performance recovery compared to typical data imputation strategies, while drastically reducing adaptation costs versus full retraining. Similarly, for indoor positioning on SPAWC2021, it yields lower errors than baselines, and its adaptation to missing modalities achieves a 24.5\% performance boost by updating less than 0.2\% parameters.

In brief, the contribution of the paper includes:
\begin{itemize}
    \item We propose GRAM-MAMBA, a novel architecture combining the efficiency of Mamba with a GRAM matrix for holistic, pair-wise multimodal feature alignment in wireless perception.
    \item A novel Adaptive Low-Rank Compensation strategy (inspired by LoRA) enables robust, parameter-efficient adaptation to missing modalities with minimal (<0.3\%) parameter updates, offering an effective alternative to data imputation techniques.
    \item Demonstration of GRAM-MAMBA's effectiveness through extensive experiments on standard benchmark datasets for indoor positioning (SPAWC2021) and human activity recognition (USC-HAD). Achieving state-of-the-art or competitive performance while significantly improving efficiency and demonstrating remarkable performance recovery during modality dropout scenarios.
\end{itemize}

\section{RELATED WORK}

When researchers seek to develop robust and accurate wireless perception systems for applications like activity recognition or localization within complex IoT environments, leveraging multi-modal fusion, efficient temporal modeling, and data imputation often become the primary strategies for achieving reliable performance under diverse conditions.

\subsection{Multi-modal Fusion Techniques in  Wireless Perception}

Effectively integrating information from diverse sources through multimodal fusion is fundamental to enhancing wireless perception systems like HAR and positioning. Research in this area explores various strategies for feature integration and alignment, aiming to maximize performance gains over unimodal approaches.

Chen et al. \cite{xfi_framework} introduced the X-Fi framework, which enforces cross-modal invariance by aligning features from modalities like mmWave radar and RGB-D to the latent space derived from IMU data. This alignment, achieved using contrastive loss to maximize feature similarity for identical actions across modalities, maps all sources to the IMU's reference feature space, yielding state-of-the-art results in human pose estimation and activity recognition. Similarly, Ouyang et al. \cite{cosmo_system} proposed the Cosmo system, incorporating a quality-guided attention mechanism. This dynamically assesses the relevance of auxiliary modalities (e.g., gyroscope, ambient sound) relative to accelerometer signals, re-weighting features from lower-quality sources to better match the accelerometer's temporal feature distribution. Training leverages both cloud-based unlabeled data and limited edge-labeled data, significantly enhancing activity recognition accuracy with heterogeneous inputs. Addressing efficient data representation, Zheng et al. \cite{information_bottleneck_fusion} utilized the Information Bottleneck principle to compress event camera streams, preserving spatio-temporal features most correlated with skeleton motion trajectories before fusing them with skeleton keypoint data, demonstrating excellent accuracy and energy efficiency. 

Theoretical foundations for multimodal alignment have been established through canonical correlation analysis \cite{canonical_correlation}, which provides a mathematical framework for finding linear relationships between different modalities. Recent advances in deep multimodal learning \cite{deep_multimodal_learning} have extended these concepts to non-linear mappings, enabling more sophisticated feature alignment strategies. However, while effective, these approaches predominantly focus on aligning multiple modalities towards a single reference sensor space (e.g., IMU or accelerometer), potentially overlooking richer, direct pairwise relationships between all modalities. The mathematical properties of Gram matrices \cite{gram_matrix_applications} offer a principled approach to capture such holistic relationships, which forms the theoretical basis for our proposed alignment strategy.

\subsection{Efficient Sequence Modeling for Sensor Time Series}

Processing sequential sensor data effectively is critical in IoT. While recurrent networks and later Transformers significantly advanced capabilities, their respective limitations, particularly the computational scaling of self-attention, have motivated the exploration of more novel
 neural network architectures .

Sophisticated sequence models like Transformers and recurrent networks are increasingly applied to wireless and sensor time series data. Wang et al. \cite{wir_transformer} pioneered Wir-transformer for wireless interference identification, using self-attention to outperform CNN/LSTM models, particularly in low SNR scenarios. However, the quadratic complexity of self-attention mechanisms \cite{attention_theory} poses significant challenges for long sequences typical in sensor data. Atik Faysal et al. \cite{transformer_lstm_hybrid} proposed a Transformer-LSTM hybrid architecture, effectively combining global context capture with sequential dependency modeling, showing benefits in few-shot learning scenarios. Investigating recurrent models, Yang et al. \cite{bilstm_gesture_recognition} demonstrated BiLSTM's superiority over LSTM for longer sequences in WiFi CSI-based gesture recognition via their SenseFi library, highlighting the importance of bidirectional processing for wireless signal analysis \cite{wireless_signal_processing}. 

Recent advances in structured state space models \cite{state_space_models,s4_original} have addressed the computational limitations of traditional sequence models by achieving linear complexity while maintaining the modeling capacity of Transformers. The emergence of Mamba \cite{mamba_original} represents a significant breakthrough in this direction, offering selective state spaces that can efficiently process long sequences with linear computational complexity. Furthermore, lightweight approaches are explored; Deng et al. \cite{lhar_knowledge_distillation} developed the LHAR framework using knowledge distillation to create efficient yet effective student models for on-device perception. While techniques like knowledge distillation can yield efficient student models, they often rely on computationally expensive pre-training of large teacher models. Furthermore, the quadratic complexity of standard Transformers often remains a direct bottleneck for resource-constrained applications, motivating the exploration of more scalable architectures like linear attention mechanisms \cite{linear_attention}.

\subsection{Robustness and Adaptation in Multimodal Systems}

Ensuring reliable performance when some sensor modalities are unavailable is critical for real-world IoT Perception systems. Currently, data imputation techniques are prevalent methods employed to handle missing data. This section reviews these and other existing approaches for adapting multi-modal models, highlighting the associated challenges related to retraining cost and adaptation efficiency.

Adiba et al. \cite{filsm_federated}, through their work on FILSM, specifically underscore the detrimental effects of modality absence on efficiency and scalability, particularly within federated learning contexts \cite{federated_multimodal} where centralized data pooling is infeasible. Furthermore, common real-world issues such as device malfunctions, unstable network connectivity, or user privacy constraints frequently result in incomplete multimodal data streams, intensifying the need for efficient adaptation strategies. Theoretical analysis of multimodal systems \cite{modality_dropout_theory} has shown that the performance degradation due to missing modalities can be severe, particularly when the model is not designed with robustness in mind. 

Existing approaches often focus on data completion techniques prior to model input. For instance, Q.Le et al. \cite{mfcpl_multimodal} explored data imputation and reconstruction methods within their MFCPL framework to handle scenarios with significant modality loss. Similarly, Nwamaka et al. \cite{vae_imputation} discussed the impact of missing sensor data on system calibration and proposed using generative models, such as Variational Autoencoders (VAEs), for imputation. Traditional statistical methods like MICE \cite{mice_imputation} and MissForest \cite{missforest_imputation} have also been adapted for sensor data imputation. However, a significant drawback common to these methods is the frequent requirement to retrain the entire perception model to incorporate the reconstructed data or effectively adapt to the altered input distribution, a process which is computationally expensive and often impractical for dynamic, real-time deployments. This highlights the need for more parameter-efficient adaptation mechanisms \cite{parameter_efficient_finetuning} that can leverage pre-trained knowledge without full retraining, such as the LoRA approach \cite{lora_original} which has shown promise in natural language processing tasks.

\section{GRAM-MAMBA}

This section details our proposed framework, GRAM-MAMBA, designed to overcome the identified limitations in multimodal perception concerning inter-modal correspondence, computational complexity, and adaptation to missing modalities. We begin by providing a precise definition of the problem context. Subsequently, we introduce the overall architecture of GRAM-MAMBA and highlight its key design aspects aimed at improving fusion effectiveness and computational efficiency. Finally, we provide detailed explanations to each constituent module of GRAM-MAMBA.
\subsection{Problem Statement}

Current multimodal perception systems, processing diverse inputs $x = \{x_1, \dots, x_N\}$ from modalities $m_i \in M$, face critical challenges: 1) Many prevalent fusion architectures learn encoders $enc_i: x_i \mapsto z_i \in \mathbb{R}^k$ but often employ an \textbf{anchor-based alignment strategy}, designating a specific anchor modality $m_a$ and primarily optimizing the alignment of other modality representations $z_j$ (where $j \neq a$) towards the anchor $z_a$. This focus inadvertently neglects the explicit modeling of \textbf{pairwise correspondence between non-anchor modalities} (i.e., ensuring high $\text{Sim}(z_j, z_k)$ for $j, k \neq a$), thereby failing to capture the full spectrum of inter-modal relationships and potentially limiting fusion performance. 2) The common reliance on Transformer models within these architectures introduces \textbf{$\mathcal{O}(L^2)$ computational complexity} relative to sequence length $L$, hindering efficiency and scalability, especially with long sequential data or in resource-limited settings. 3) Adapting large pre-trained models (parameters $\Theta$) when faced with \textbf{modal unavailability} during updates (using only $x' = \{x_i \mid m_i \in M', M' \subset M\}$) typically requires computationally expensive \textbf{full retraining of $\Theta$} or intricate data imputation for missing $\{x_j \mid m_j \notin M'\}$, lacking effective \textbf{parameter-efficient fine-tuning (PEFT)} methods that learn only a minimal set of parameters $\Delta\Theta$ ($|\Delta\Theta| \ll |\Theta|$) using just the incomplete data $x'$.

\subsection{Framework Overview}
\begin{figure}
    \centering
    \includegraphics[width=1\linewidth]{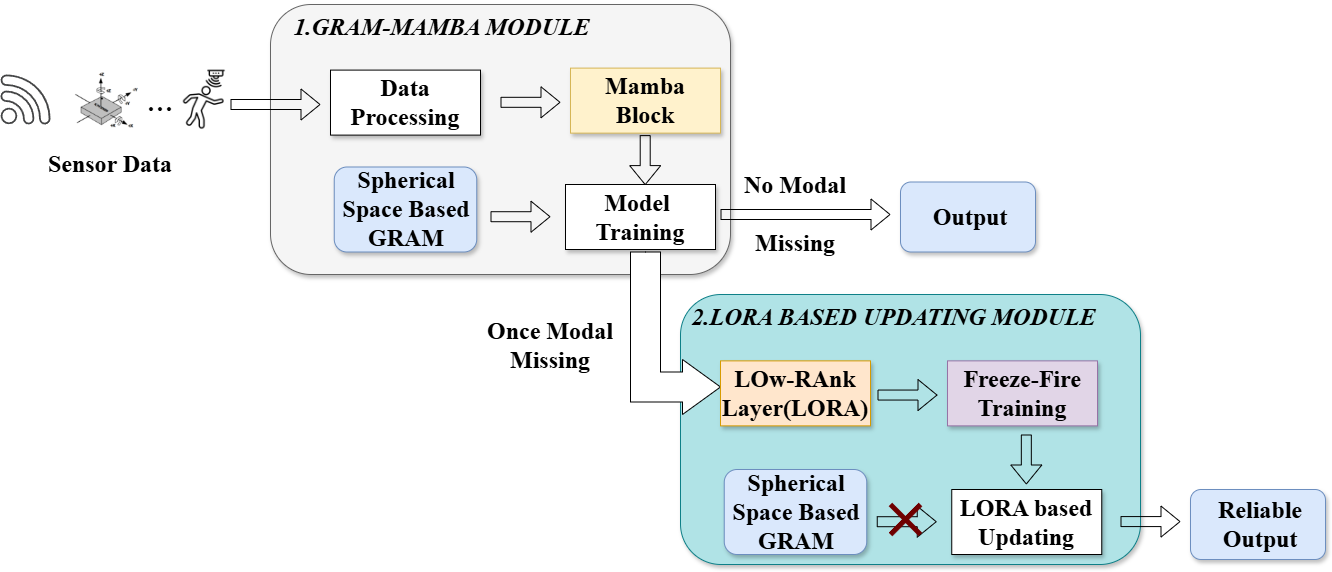}
    \caption{System Overview}
    \label{fig:enter-label}
\end{figure}

As a solution, we propose GRAM-MAMBA, a novel multi-modal perception framework designed for enhancing non-anchor inter-modal correspondence and achieving efficient sequence modeling, incorporating a LoRA-inspired strategy for parameter-efficient adaptation to missing modalities. We present an overview of our proposed framework covering three core contributions: (1) Utilizing GRAM matrices to explicitly enforce pairwise correspondence between non-anchor modality representations sharing the same label; (2) Employing the Mamba architecture for effective sequential data modeling with linear computational complexity; and (3) Integrating low-rank adaptation layers for efficient fine-tuning using incomplete modal data during model updates. The design and details of each component of our proposed framework are given below in the following sections.

\subsection{Model Training Module}
\begin{figure}[htbp] 
    \centering 

    \begin{subfigure}[b]{0.48\textwidth} 
        \centering
        \includegraphics[width=\linewidth]{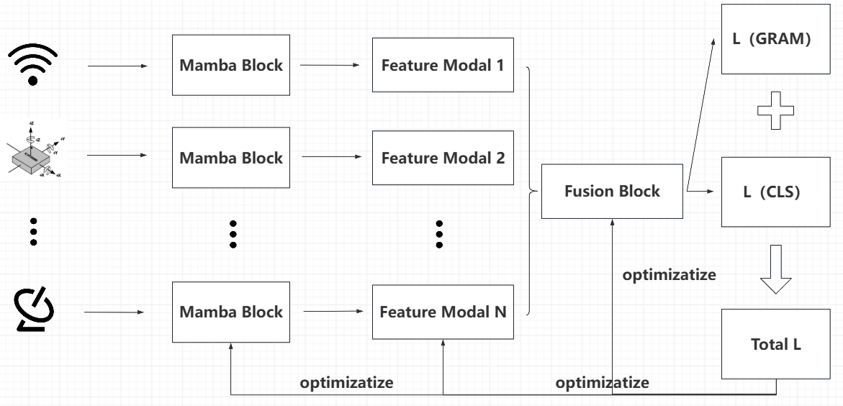} 
        \caption{Pre-training or Full Modal Training Phase}
        \label{fig:phase_a} 
    \end{subfigure}
    \hfill 
    \begin{subfigure}[b]{0.48\textwidth} 
        \centering
        \includegraphics[width=\linewidth]{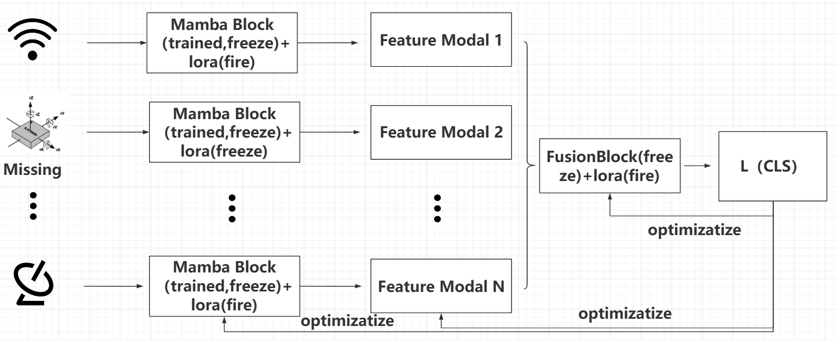} 
        \caption{Adaptation Phase with Missing Modalities using LoRA}
        \label{fig:phase_b} 
    \end{subfigure}

    \caption{Overview of the GRAM-MAMBA framework phases. (a) Shows the training with all modalities, optimizing both GRAM and Classification losses. (b) Illustrates the parameter-efficient adaptation using LoRA when some modalities are missing, freezing base parameters and only training LoRA layers.}
    \label{fig:framework_overview} 
\end{figure}

Let the input consist of $N$ distinct modalities $M = \{m_1, \dots, m_N\}$, encompassing diverse sensor types like Wi-Fi channel state information, IMU readings, or Radar signatures. These are typically captured as time series data, $x = \{x_1, \dots, x_N\}$, where each $x_i$ represents a sequence of measurements of length $L_i$. Each modality stream $x_i$ is independently processed by a dedicated Mamba block \cite{mamba_original}, a structured State Space Model (SSM) architecture specifically designed for efficient modeling of long sequences. Mamba builds upon the foundation of continuous-time SSMs ($h'(t) = Fh(t) + Gx(t)$, $y(t) = Jh(t)$), which are discretized using methods like Zero-Order Hold (ZOH) with a step size $\Delta$ to yield parameters $\bar{F} = \exp(\Delta F)$ and $\bar{G} = (\Delta F)^{-1}(\exp(\Delta F) - I) \cdot \Delta G$. The core Mamba computation involves a selective scan mechanism operating on the discrete recurrence relation:
\begin{equation}
 h_t = \bar{F} h_{t-1} + \bar{G} x_t
\end{equation}
where $h_t$ represents the hidden state at time step $t$. This selection mechanism allows Mamba to focus on relevant information across the sequence, enabling efficient modeling with linear complexity $O(L)$ relative to sequence length $L$, a crucial advantage over the $O(L^2)$ complexity of standard Transformers, especially in resource-constrained IoT settings. The $i$-th Mamba block, parameterized by $\Theta_{\text{Mamba}, i}$, maps the input sequence $x_i$ to a final raw feature vector $\tilde{z}_i \in \mathbb{R}^k$, typically derived from the final hidden state or a pooling operation over the sequence of states.

To focus on the directional similarity and inter-modal correlation, independent of feature magnitudes, each raw feature vector $\tilde{z}_i$ is L2-normalized, projecting it onto the unit hypersphere: $z_i = \tilde{z}_i / \|\tilde{z}_i\|_2$. These normalized features $Z = \{z_1, \dots, z_N\}$ form the basis for our holistic inter-modal alignment strategy using a Gram matrix $G \in \mathbb{R}^{N \times N}$. Due to the normalization, the pairwise similarity simplifies to the inner product: $G_{ij} = z_i \cdot z_j$, with $G_{ii}=1$. The primary objective is to enforce strong alignment ($G_{ij} \to 1$) between features $z_i, z_j$ derived from different modalities but corresponding to the same underlying semantic class (e.g., the same activity). This explicitly models all pairwise correlations, unlike anchor-based methods that only align secondary modalities to a single reference. The mechanism to encourage this alignment relies on the properties of the Gram matrix determinant, as formalized below:

\begin{theorem}
Let $G$ be the $N \times N$ Gram matrix constructed from L2-normalized vectors $Z = \{z_1, \dots, z_N\}$, where $G_{ij} = z_i \cdot z_j$. As $G_{ij} \to 1$ for all $1 \le i, j \le N$, the determinant of $G$ approaches zero, i.e., $\det(G) \to 0$.
\end{theorem}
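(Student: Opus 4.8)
The plan is to prove the claim by continuity, reducing it to the rank deficiency of the all-ones matrix. The key observation is that the determinant $\det\colon \mathbb{R}^{N\times N}\to\mathbb{R}$ is a polynomial in the $N^{2}$ matrix entries and is therefore continuous. The hypothesis $G_{ij}\to 1$ for all $i,j$ is exactly the statement that $G$, regarded as a point in $\mathbb{R}^{N\times N}$, converges entrywise to the all-ones matrix $J=\mathbf{1}\mathbf{1}^{\top}$ with $\mathbf{1}=(1,\dots,1)^{\top}$. By continuity of the determinant, $\det(G)\to\det(J)$, so it remains only to evaluate $\det(J)$.

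For $N\ge 2$, the matrix $J$ has all columns equal to $\mathbf{1}$, hence $\operatorname{rank}(J)=1<N$ and $\det(J)=0$; equivalently, $J$ has eigenvalue $N$ with multiplicity one and eigenvalue $0$ with multiplicity $N-1$. Combining this with the previous step yields $\det(G)\to 0$, which is the assertion. The case $N=1$ is degenerate, since then $G=[1]$ identically and $\det(G)=1$, so the theorem is to be read with the implicit assumption $N\ge 2$; I would state this restriction explicitly at the outset.

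For completeness I would also indicate the underlying geometric picture, which some readers may find more illuminating than the abstract continuity argument. Because each $z_i$ is a unit vector, $\lVert z_i-z_j\rVert_2^{2}=2\bigl(1-z_i\cdot z_j\bigr)=2\bigl(1-G_{ij}\bigr)$, so $G_{ij}\to 1$ forces all the $z_i$ to collapse toward one another; in the limit the collection $\{z_1,\dots,z_N\}$ is linearly dependent (for $N\ge 2$, two coincident unit vectors already suffice), and a Gram matrix of linearly dependent vectors is singular. Moreover $\det(G)\ge 0$ throughout the process, since every Gram matrix is positive semidefinite, so $\det(G)$ in fact decreases to $0$ through nonnegative values; a quantitative rate such as $\det(G)=O\bigl(\max_{i\ne j}(1-G_{ij})^{\,N-1}\bigr)$ can be extracted via Weyl's eigenvalue perturbation bound applied to $G=J-E$, but this is not needed for the stated result.

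I do not anticipate a substantive obstacle: the argument is short, and the only points requiring care are (i) phrasing the hypothesis precisely as entrywise convergence $G\to J$ so that continuity of $\det$ applies verbatim, (ii) isolating the trivial $N=1$ exception, and (iii) resisting the temptation to assert a convergence rate, since the theorem claims only $\det(G)\to 0$.
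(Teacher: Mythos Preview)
Your argument is correct and, in fact, cleaner than the paper's. The paper first restricts to the equi-correlation case $G_{ij}=c$ for all $i\neq j$, invokes the closed-form $\det(G)=(1-c)^{N-1}\bigl(1+(N-1)c\bigr)$, and lets $c\to 1$; it then handles the general case by noting that $G$ approaches the rank-one all-ones matrix, which is essentially your continuity argument stated informally. Your route bypasses the special-case computation entirely: continuity of $\det$ plus $\operatorname{rank}(J)=1$ already gives the full statement, and you correctly flag the $N=1$ degeneracy that the paper leaves implicit. What the paper's detour buys is an explicit vanishing rate $(1-c)^{N-1}$ in the symmetric case, which motivates their use of $\det(G)$ as a loss; your Weyl-perturbation remark recovers the same order of magnitude more generally, so nothing is lost.
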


\begin{proof}
Consider the case where the Gram matrix $G$ approaches the matrix of all ones. This occurs when the cosine similarity $G_{ij}$ between all pairs of distinct normalized vectors ($i \neq j$) approaches 1. Let us analyze a specific structure where all off-diagonal elements are equal, $G_{ij} = c$ for $i \neq j$, and $G_{ii} = 1$ due to normalization. The determinant of such an $N \times N$ matrix is given by \cite{matrix_analysis}:
\begin{equation}
 \det(G) = (1 - c)^{N-1} (1 + (N - 1)c)
\end{equation}
As the features become perfectly aligned, the similarity $c = G_{ij}$ approaches 1 for $i \neq j$. Taking the limit as $c \to 1$:
\begin{equation}
 \lim_{c \to 1} \det(G) = \lim_{c \to 1} (1 - c)^{N-1} (1 + (N - 1)c)
\end{equation}
Since $N > 1$ (for multimodal scenarios), the term $(1 - c)^{N-1} \to 0$. The term $(1 + (N - 1)c) \to (1 + N - 1) = N$. Thus, the limit is $0 \times N = 0$.
While the actual Gram matrix $G$ resulting from the feature vectors may not have perfectly identical off-diagonal elements, the principle holds: as all $G_{ij} \to 1$, the matrix $G$ approaches the rank-1 matrix of all ones, whose determinant is zero. Therefore, minimizing $\det(G)$ drives all pairwise similarities $G_{ij}$ towards 1.
\end{proof}

Leveraging Theorem 1, we define the alignment loss term as $L_{\text{GRAM}}(G) = \det(G)$. Minimizing this loss directly encourages the desired alignment where $G_{ij} \to 1$. Geometrically, this minimization corresponds to collapsing the volume $V = \sqrt{\det(G)}$ of the parallelepiped spanned by the feature vectors $z_1, \dots, z_N$, driving them towards semantic agreement represented by collinearity in the feature space.

Subsequent to feature extraction and alignment regularization, the normalized modality features $Z$ are typically aggregated via concatenation, $z_{\text{concat}} = [z_1; \dots; z_N]$. This aggregated representation is then processed by a fusion block (e.g., an MLP parameterized by $\Theta_{\text{Fusion}}$) and a final classification layer (parameterized by $\Theta_{\text{CLS}}$) using Softmax to produce the class probability distribution $\hat{y}$. The entire network architecture, encompassing all Mamba encoders and the fusion/classification head, with parameters $\Theta = \{\Theta_{\text{Mamba}, i}, \Theta_{\text{Fusion}}, \Theta_{\text{CLS}}\}$, is trained end-to-end. The optimization minimizes a composite loss function that balances the primary task objective (e.g., cross-entropy $L_{\text{CLS}}$) with the inter-modal alignment pressure:
\begin{equation}
 L_{\text{total}} = L_{\text{CLS}}(\hat{y}, y) + \beta L_{\text{GRAM}}(G)
\end{equation}
Here, $y$ represents the ground-truth label, and the hyperparameter $\beta$ controls the relative contribution of the GRAM determinant loss, ensuring that alignment enhancement does not unduly compromise classification accuracy. This integrated framework synergistically combines the sequence modeling efficiency of Mamba with the explicit, holistic alignment enforced by the GRAM determinant loss, aiming for accurate, efficient, and robust multimodal perception suitable for challenging IoT applications.

\subsection{Model Updating Module}

While the GRAM-MAMBA framework described previously is trained assuming the availability of all $N$ modalities, real-world deployment necessitates robustness against sensor failures or data dropouts, resulting in scenarios where only a subset of modalities $M' \subset M$ is available. To address this challenge efficiently without resorting to computationally expensive full retraining for every possible subset $M'$, we introduce an Adaptive Low-Rank Compensation mechanism inspired by Parameter-Efficient Fine-Tuning (PEFT) techniques, specifically Low-Rank Adaptation (LoRA) \cite{lora_original}.

LoRA allows for efficient adaptation of large pre-trained models by freezing the vast majority of the original model parameters ($\Theta$) and introducing a small number of trainable parameters ($\Delta \Theta$, where $|\Delta \Theta| \ll |\Theta|$). For a given pre-trained layer with weight matrix $W_0 \in \mathbb{R}^{d \times k}$, LoRA modifies the forward pass $h = W_0 x$ by adding a low-rank update:
\begin{equation}
    h = W_0 x + \Delta W x = W_0 x + B A x
\end{equation}
Here, $\Delta W = B A$ represents the low-rank decomposition, where $B \in \mathbb{R}^{d \times r}$ and $A \in \mathbb{R}^{r \times k}$ are the trainable low-rank matrices, and the rank $r \ll \min(d, k)$. During adaptation, only the parameters within $A$ and $B$ are updated, while the original weights $W_0$ remain frozen. This concept is illustrated conceptually in Figure~\ref{fig:phase_b}.

In the context of GRAM-MAMBA, we strategically inject these trainable LoRA modules into the pre-trained architecture. As depicted in Figure~\ref{fig:phase_a}, a distinct LoRA adapter, parameterized by $\Delta \Theta_{\text{Mamba}, i}$, is inserted after each modality-specific Mamba encoder ($\Theta_{\text{Mamba}, i}$). Additionally, another LoRA adapter, parameterized by $\Delta \Theta_{\text{Fusion}}$, is incorporated following the multimodal fusion block ($\Theta_{\text{Fusion}}$). The initial pre-training optimizes the base parameters $\Theta = \{\Theta_{\text{Mamba}, 1..N}, \Theta_{\text{Fusion}}, \Theta_{\text{CLS}}\}$ along with the $L_{\text{GRAM}}$ objective.

When faced with a scenario where only modalities $M'$ are present, the adaptation proceeds as follows:
\begin{enumerate}
    \item \textbf{Parameter Freezing:} The original pre-trained parameters $\Theta$ of the entire network are frozen. Furthermore, the LoRA adapters corresponding to the \textit{missing} modalities, i.e., $\{\Delta \Theta_{\text{Mamba}, j} | m_j \notin M'\}$, are also frozen (or effectively bypassed).
    \item \textbf{Parameter Training:} Only the LoRA adapters associated with the \textit{available} modalities, $\{\Delta \Theta_{\text{Mamba}, i} | m_i \in M'\}$, and the LoRA adapter for the fusion block, $\Delta \Theta_{\text{Fusion}}$, are made trainable.
    \item \textbf{Fine-tuning Objective:} The model is fine-tuned using only the data from the available modalities $x' = \{x_i | m_i \in M'\}$. The optimization objective during this phase typically focuses on the primary task loss, $L_{\text{CLS}}$, calculated based on the final prediction derived from the available inputs processed through the frozen base model and the active, trainable LoRA layers. The $L_{\text{GRAM}}$ term, which relies on comparing all modalities, is generally not applicable or requires modification during this adaptation phase.
\end{enumerate}
This Adaptive Low-Rank Compensation strategy enables the pre-trained GRAM-MAMBA model to rapidly and efficiently adjust its internal representations and fusion logic to compensate for the missing information.

\section{Evaluation Methodology}

In this section, we introduce the public real-world datasets used for our experiments and the baselines for comparison.
\subsection{Dataset}

We conduct experiments on two public real-world wireless perception datasets.

\textbf{USC-HAD dataset:} The USC-HAD dataset \cite{usc_had_dataset} was collected from 14 subjects (7 female, 7 male) with the consideration of gender, age, height, and weight. This diversity enriches the dataset, ensuring its insights can be woven into a broader understanding of human movement across different
 demographics. Subjects wore the mobile phone at their front right hip, immersing themselves in the familiar rhythms of daily life. The dataset was gathered in natural settings rather than a controlled lab, reflecting authentic movement patterns. The data collection process spanned six hours per subject. The dataset consists of 12 activities including Walking right, Walking left, Walking forward, Walking upstairs, Walking downstairs, Jumping, Standing, Sitting, Sleeping,Running forward, Elevator up, and Elevator down. The sampling rate is 100 Hz, which offers a high-resolution view of human dynamics.

\textbf{PAMAP2 DATASET:}The PAMAP2 dataset \cite{pamap2_dataset} was collected from 9 subjects (5 male, 4 female) with diverse ages and BMI. This variety ensures broad applicability. Subjects wore sensors on the chest, wrist, and ankle, capturing activities in real-world settings. Each subject participated for about 2.5 hours.The dataset includes 18 activities like sitting, walking, cycling, and sports. The high 100 Hz sampling rate provides detailed motion data. Heart rate data at 1 Hz adds physiological insights. This comprehensive dataset supports robust activity recognition research.

\subsection{Baselines}

To comprehensively assess our proposed method, we conduct two sets of comparisons. First, the model's overall performance is benchmarked against six leading baseline methods using complete multi-modal data. Second, the robustness and efficiency of our adaptation mechanism for handling missing data are evaluated against nine data imputation strategies, which include five deep learning methods and four interpolation methods.

\textbf{Model's overall performance :}  OCluDAL \cite{ocludal_method} proposes a human activity recognition method based on dynamic active learning. By dynamically selecting unlabeled samples and identifying new activity categories, it significantly reduces the annotation cost and simultaneously improves the performance of activity recognition. ConvLSTM \cite{convlstm_attention} proposes a neural network architecture that combines deep convolutional LSTM (ConvLSTM) and self-attention mechanisms, which is used to accurately decode human activities through wearable sensors. LAG \cite{lag_method} proposes a local and global alignment (LAG) method for generalizing human activity recognition (HAR) based on sensors. It alleviates the distribution offset problem by learning the domain-invariant features between the training set and the test set, thereby enhancing the generalization ability of the model. 1D-CNN-LSTM: proposes a human activity recognition method that combines one-dimensional convolutional neural networks (1D CNN) and long Short-Term memory networks (LSTM). TCN-Attention \cite{tcn_attention} combines the Temporal Convolutional Network (TCN) and the attention mechanism. Through TCN, multi-scale temporal series features are extracted and the attention mechanism is utilized to dynamically focus on key information, significantly enhancing the modeling ability of time series tasks. Moreover, to fully verify the effectiveness of the proposed GRAM-MAMBA method, comparative experiments were also conducted against representative classical architectures, such as ViT \cite{vit_original}, DualViT, and Cait \cite{cait}.

\textbf{Handling missing data:} VAE \cite{vae_imputation} imputes missing values by learning a probabilistic latent representation of the data and generating plausible completions using its decoder network. NNRW estimates missing values by utilizing values from the most similar complete data points (nearest neighbors), often through regression or weighted averaging. MISSFOREST \cite{missforest_imputation} iteratively predicts missing values for each variable using Random Forest models trained on the observed portions of the data until the imputed dataset stabilizes. MICE \cite{mice_imputation} imputes missing data iteratively by fitting conditional regression models for each variable with missingness based on the other variables. ImputeFormer \cite{imputeformer} integrates low-rank structural priors into a Transformer architecture via projected temporal attention and embedded spatial attention, combined with a Fourier sparsity loss, to perform generalizable spatiotemporal data imputation. Moreover, to fully verify the effectiveness of the proposed adaptive compensation method, comparative experiments were also conducted with the classical interpolation methods (cubic, Lagrange, Akima and PCHIP).

\section{Evaluation Results}

\subsection{Model's overall performance}
Table 1 presents the experimental results comparing our proposed GRAM-MAMBA model against several baseline methods on the USC-HAD dataset for the Human Activity Recognition (HAR) task, assuming complete modal availability during evaluation. Performance is primarily evaluated using F1-score and Overall Accuracy (OA), with higher values indicating better performance.Meanwhile, compared with a host of classic architectures such as VIT, VIM, and CAIT, our approach also has significant advantages. Our GRAM-MAMBA model demonstrates superior performance across all evaluation metrics, significantly outperforming these HAR-focused baselines.

\begin{table}[htbp] 
    \centering 
    \caption{Performance comparison of GRAM-MAMBA against Vision Transformer (ViT) variants and other Mamba-based architectures on the HAR task.}
    \label{tab:comparison_vit_mamba} 
    \begin{tabular}{lcc}
        \toprule 
        \textbf{Model} & \textbf{OA (\%)} & \textbf{F1 (\%)} \\
        \midrule 
        DeepViT       & 87.87 & 84.14 \\
        ViT           & 89.55 & 85.96 \\
        SwimT         & 91.44 & 88.46 \\
        VIM           & 91.65 & 88.80 \\
        VAN           & 91.17 & 88.00 \\
        Cait          & 79.80 & 74.58 \\ 
        WaveVIT       & 91.61 & 88.62 \\
        DualVIT       & 91.78 & 89.04 \\
        MambaOut      & 91.13 & 87.89 \\
        ActivityMamba & 91.78 & 89.13 \\
        \midrule 
        \textbf{GRAM-Mamba} & \textbf{93.81} & \textbf{93.55} \\ 
        \bottomrule 
    \end{tabular}
\end{table}

\begin{table}[htbp] 
    \centering 
    \caption{Performance comparison of GRAM-MAMBA against HAR-focused baseline methods on the USC-HAD dataset.}
    \label{tab:comparison_har} 
    \begin{tabular}{lcc}
        \toprule 
        \textbf{Model} & \textbf{OA (\%)} & \textbf{F1 (\%)} \\
        \midrule 
        OCLuDAL                 & 91.70          & -              \\
        ConvLSTM                & 90.88          & -              \\
        LAG                     & 84.40          & -              \\
        1D-CNN-LSTM             & -              & 74.62          \\
        TCN-Attention           & 93.17          & 88.36          \\ 
        ActivityMamba           & 91.78          & 89.13          \\ 
        \midrule 
        \textbf{GRAM-Mamba}     & \textbf{93.81} & \textbf{93.55} \\ 
        \bottomrule 
    \end{tabular}
\end{table}

In summary, GRAM-MAMBA's superior performance in this standard HAR setting (with complete data) stems from its effective integration of the efficient Mamba backbone for capturing temporal dynamics and the novel GRAM matrix optimization that enforces holistic alignment across all available sensor modalities, leading to more discriminative and well-fused representations for accurate activity recognition.
\subsection{Handling missing data}
\begin{table}[htbp] 
    \centering 
    \caption{Performance of GRAM-MAMBA under different modality missing conditions using Adaptive Low-Rank Compensation.}
    \label{tab:adaptation_results} 
    \begin{tabular}{lccccc}
        \toprule 
        \textbf{Condition} & \textbf{OA (\%)} & \textbf{F1-score (\%)} & \textbf{Parameters} & \textbf{F1 $\uparrow$ (\%)} & \textbf{ACC $\uparrow$ (\%)} \\
        \midrule 
        Both normal & 93.93 & 93.92 & 320397 & - & - \\ 
        GYO Missing & 63.08 & 52.07 & 768 (\textbf{0.2397\%}) & 71.94 (19) & 77.64 (14) \\ 
        ACC Missing & 62.20 & 45.32 & 768 (\textbf{0.2397\%}) & 71.30 (26) & 71.24 (9)  \\ 
        \bottomrule 
    \end{tabular}
\end{table}
\begin{table}[htbp] 
    \centering 
    \caption{Performance comparison of different imputation and interpolation methods against the proposed adaptive approach ('Ours') under the 'ACC Missing' condition.}
    \label{tab:imputation_comparison_acc_missing} 
    \begin{tabular}{lcc}
        \toprule 
        \textbf{Method} & \textbf{OA (\%)} & \textbf{F1 (\%)} \\
        \midrule 
        \multicolumn{3}{l}{\textit{Advanced Imputation Methods}} \\ 
        VAE             & 74.89 & 63.27 \\
        NNRW            & 65.00 & 50.39 \\
        MISS FOREST     & 54.96 & 46.10 \\
        MICE            & 54.97 & 46.08 \\
        Impute Former   & 57.26 & 40.35 \\
        \midrule 
        \multicolumn{3}{l}{\textit{Classical Interpolation Methods}} \\ 
        cubic           & 67.79 & 60.47 \\
        lagrange        & 57.53 & 48.94 \\
        Akima           & 64.35 & 49.40 \\
        PCHIP           & 73.15 & 66.46 \\
        \midrule 
        \textbf{Ours (LoRA Adapt)} & \textbf{76.20} & \textbf{66.54} \\ 
        \bottomrule 
    \end{tabular}
\end{table}

\begin{table}[htbp] 
    \centering 
    \caption{Performance comparison of different imputation and interpolation methods against the proposed adaptive approach ('Ours') under the 'GYO Missing' condition.}
    \label{tab:imputation_comparison_gyo_missing} 
    \begin{tabular}{lcc}
        \toprule 
        \textbf{Method} & \textbf{OA (\%)} & \textbf{F1 (\%)} \\
        \midrule 
        \multicolumn{3}{l}{\textit{Advanced Imputation Methods}} \\ 
        VAE             & 57.95 & 55.28 \\
        NNRW            & 61.60 & 54.84 \\
        MISS FOREST     & 60.84 & 48.89 \\
        MICE            & 63.08 & 52.07 \\
        Impute Former   & 60.53 & 53.52 \\
        \midrule 
        \multicolumn{3}{l}{\textit{Classical Interpolation Methods}} \\ 
        cubic           & 53.48 & 52.89 \\
        lagrange        & 48.26 & 42.17 \\
        Akima           & 62.27 & 48.75 \\
        PCHIP           & 57.64 & 56.44 \\
        \midrule 
        \textbf{Ours (LoRA Adapt)} & \textbf{70.51} & \textbf{65.27} \\ 
        \bottomrule 
    \end{tabular}
\end{table}

\begin{table}[htbp] 
    \centering 
    \caption{Indoor Positioning Performance (Error Metrics) of lstm under Various Missing Modality Conditions using Adaptive Low-Rank Compensation.}
    \label{tab:positioning_adaptation} 
    \resizebox{\textwidth}{!}{
    \begin{tabular}{lcccc}
        \toprule 
        \textbf{Condition} & \textbf{Median Error} & \textbf{Mean Error} & \textbf{Param. Increase} & \textbf{Perf. Improve. (\%)} \\ 
        \midrule 
        Exp (no missing, lstm) & 0.6834 & 1.1651 & - & - \\ 
        \midrule 
        WIFI Missing       & 0.7475 & 1.1655 & \textbf{0.1754\%} & \textbf{12.91\%} \\
        IMU Missing        & 1.2121 & 1.5782 & \textbf{0.1754\%} & \textbf{18.54\%} \\
        UWB Missing        & 1.0176 & 1.4142 & \textbf{0.1754\%} & \textbf{46.18\%} \\
        \midrule 
        WIFI+IMU Missing   & 1.2496 & 1.6648 & \textbf{0.1169\%} & \textbf{15.47\%} \\
        WIFI+UWB Missing   & 1.6495 & 1.9506 & \textbf{0.1169\%} & \textbf{15.98\%} \\
        IMU+UWB Missing    & 2.2215 & 2.3419 & \textbf{0.1169\%} & \textbf{37.62\%} \\
        \bottomrule 
    \end{tabular}%
    } 
\end{table}

As evidenced by the results in Table 4, our Adaptive Low-Rank Compensation mechanism, applied to the pre-trained GRAM-MAMBA model, consistently and significantly outperforms all baseline imputation methods. While advanced methods like VAE or ImputeFormer attempt sophisticated data generation, and simpler methods like MICE or interpolation fill gaps based on observed data, they fundamentally operate by creating synthetic data before or independent of the downstream task model's inference process. This contrasts sharply with our approach, which directly adapts the perception model itself to compensate for the missing information source.

The superior performance of our method underscores the limitations inherent in relying solely on data imputation. Imputation techniques can suffer from cumulative errors, require unavailable contextual data (especially interpolation methods), fail to capture complex non-linear relationships accurately, or introduce biases, ultimately hindering the performance of the subsequent HAR classification. Furthermore, imputation often requires separate pre-processing steps. In contrast, our Adaptive Low-Rank Compensation mechanism leverages the knowledge encoded in the pre-trained model and efficiently fine-tunes minimal parameters (<0.3\%) to adjust the model's internal representations and fusion logic in response to the specific modality absence. This direct, parameter-efficient model adaptation proves more effective than merely attempting to reconstruct the missing input signal.To demonstrate that our method can be applied to other tasks and other model architectures, we also implemented a compensation strategy in the indoor positioning wireless sensing task based on lstm, achieving an average performance improvement of 24.45\% when the number of parameters increased by less than 0.2\%.

Therefore, these results validate the effectiveness of the Adaptive Low-Rank Compensation strategy as a robust and parameter-efficient solution for maintaining high HAR performance in multimodal systems facing real-world sensor data unavailability, offering clear advantages over traditional data imputation approaches.

\subsection{Ablation Study on GRAM-MAMBA}
To isolate and understand the impact of the GRAM loss component, this ablation study contrasts the inter-modal correlations learned by GRAM-MAMBA when the $L_{\text{GRAM}}$ term is active versus when it is ablated from the training objective.

\begin{figure}
    \centering
    \includegraphics[width=1\linewidth]{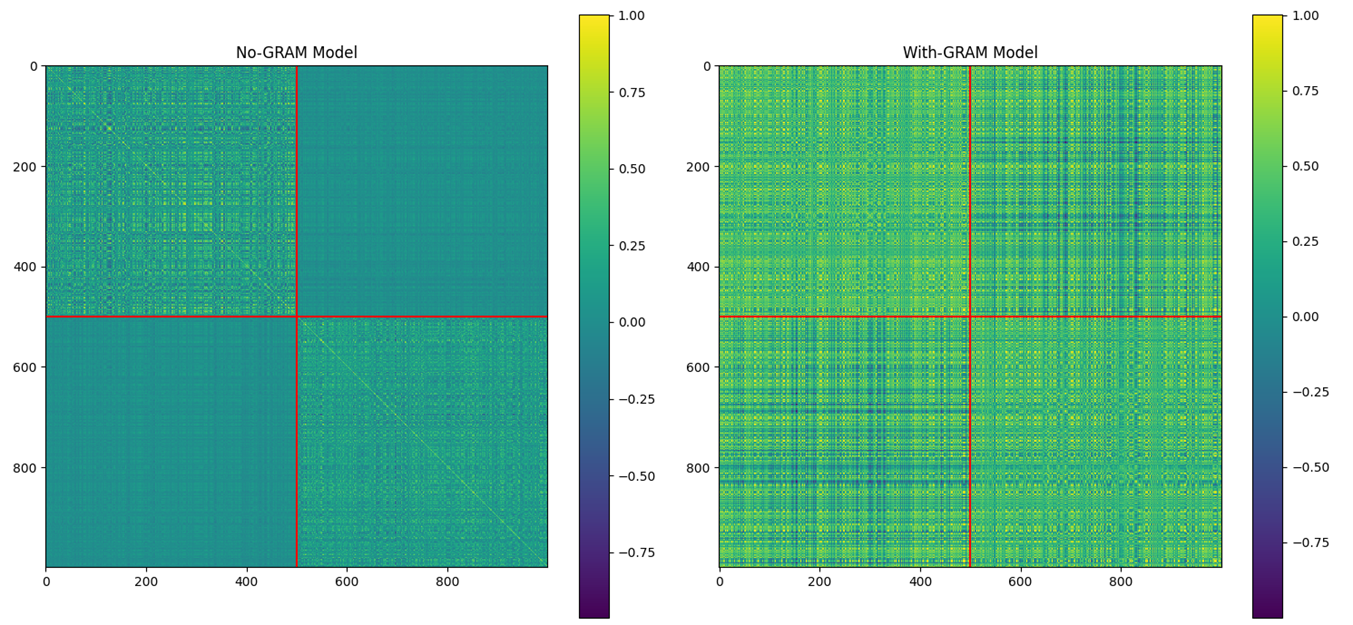}
    \caption{Gram Matrix Comparison: Inter-modal Correlations with and without GRAM Loss}
    \label{fig:gram_comparison}
\end{figure}
Figure~\ref{fig:gram_comparison} visually compares the inter-modal correlations via Gram matrices. The matrix from the model trained without the $L_{\text{GRAM}}$ term (left) shows weak off-diagonal correlations, particularly between the ACC and GYO modalities. In contrast, the matrix from the model including the $L_{\text{GRAM}}$ term (right) exhibits significantly stronger correlations in the corresponding off-diagonal blocks (representing ACC-GYO interaction). This demonstrates that optimizing with $L_{\text{GRAM}}$ successfully enhances feature correlation and alignment between different modalities for the same sample.

\section{DISCUSSION}
The GRAM-MAMBA framework represents a significant step towards realizing efficient, robust, and holistically aligned multi-modal perception systems suitable for practical IoT and wireless sensing applications. Its significance stems from concurrently addressing key limitations often tackled in isolation: the computational burden of advanced fusion models, suboptimal feature alignment, and the lack of efficient adaptation to sensor failures. By leveraging the Mamba architecture, GRAM-MAMBA achieves linear complexity scaling, making the processing of long sensor time-series feasible on resource-constrained edge devices where quadratic-complexity models often falter. Critically, it moves beyond simplistic concatenation or unidirectional alignment by incorporating an optimized GRAM matrix-based strategy; minimizing the Gram determinant explicitly encourages holistic, pairwise feature alignment across all modalities, capturing richer inter-dependencies and leading to more semantically cohesive representations, as validated by ablation studies. Furthermore, the introduced Adaptive Low-Rank Compensation mechanism, inspired by LoRA, provides a pragmatic and parameter-efficient (<0.3\% parameter updates) solution for robustness against missing modalities, demonstrably outperforming traditional data imputation techniques and avoiding computationally prohibitive retraining protocols. This unification of architectural efficiency, principled alignment, and adaptive robustness within a single framework offers substantial advantages for deploying reliable real-world systems for tasks like HAR and indoor positioning. The success of this approach points towards promising future directions, including its application to other multi-modal tasks, further optimization of the adaptation strategy, and hardware-specific implementations, ultimately paving the way for more dependable and capable multi-modal wireless perception in dynamic environments.

\section{Conclusion}
This paper presented GRAM-MAMBA, tackling critical efficiency, alignment, and adaptation challenges in multi-modal wireless perception. It uniquely combines Mamba's linear complexity, GRAM-based holistic alignment, and parameter-efficient (<0.3\%) LoRA-inspired adaptation for robustness against sensor failure. Achieving state-of-the-art accuracy and significantly outperforming imputation methods during adaptation, GRAM-MAMBA provides a unified, practical framework for building dependable multi-modal wireless systems suitable for dynamic, resource-constrained environments.

\bibliographystyle{ACM-Reference-Format}
\bibliography{gram_mamba_refs}

\end{document}